\documentclass[]{article}

\usepackage{times} % CHeck with Frank
\usepackage{arxiv}

\usepackage{mathtools}  % Loads and improves amsmath
\usepackage{amssymb}    % Loads amsfonts automatically
\usepackage{dsfont}     % For \mathds{R} etc.
\usepackage{amsthm}     % For arxiv

\newtheorem{theorem}{Theorem}

\newtheorem{corollary}{Corollary}
\newtheorem{definition}{Definition}

\usepackage[T1]{fontenc}
\usepackage{xspace}
\usepackage{float}      % Allows [H] placement for figures/tables

\usepackage{graphicx}
\usepackage{adjustbox}
\usepackage{subfig}
\usepackage{booktabs}   % For \toprule, \midrule, \bottomrule
\usepackage{multirow}
\usepackage{caption}

\usepackage{algorithm}   % The "wrapper" (provides the floating environment)
\usepackage{algorithmic} % The "content" (provides the \STATE, \IF, etc.)

\usepackage{xcolor}

\DeclarePairedDelimiter\floor{\lfloor}{\rfloor}
\DeclarePairedDelimiter\ceil{\lceil}{\rceil}
\newcommand{\ignore}[1]{}

\newcommand{\gsemo}[1]{GSEMO\xspace}

\begin{document}

\author{
Liam Wigney\\
Optimisation and Logistics\\
School of Computer Science \\ and Information Technology\\
Adelaide University\\
Adelaide, Australia
\And
Frank Neumann\\
Optimisation and Logistics\\
School of Computer Science \\ and Information Technology\\
Adelaide University\\
Adelaide, Australia
}

\title{Analysis of Multitasking Pareto Optimization for Monotone Submodular Problems
}
\maketitle

\begin{abstract}
	Pareto optimization via evolutionary multi-objective algorithms has been shown to efficiently solve constrained monotone submodular functions. Traditionally when solving multiple problems, the algorithm is run for each problem separately.
	We introduce multitasking formulations of these problems that are an effective way to solve multiple related problems with a single run.
	In our setting the given problems share a monotone submodular function $f$ but have different knapsack constraints.
	We examine the case where elements within a constraint have the same cost and show that our multitasking formulations result in small Pareto fronts. This allows the population
	to share solutions between all problems leading to significant improvements compared to running several classical approaches independently.
	Using rigorous runtime analysis, we analyze the expected time until the introduced multitasking approaches obtain a $(1-1/e)$-approximation for each of the given problems. Our experimental investigations for the maximum coverage problem give further insight into the dynamics behind how the approach works and doesn't work in practice for problems where elements within a constraint also have varied costs.
\end{abstract}

\section{Introduction}
Evolutionary multitasking is a recent evolutionary algorithm approach that allows several (related) problems to be solved using the population of an evolutionary algorithm ~\cite{DBLP:conf/soict/Ong15,Tan2023,DBLP:conf/cec/DaGOF16}. It significantly distinguishes itself from the classical approach of using a population to compute one single near-optimal solution. The approach has found applications in a wide range of areas, including continuous, combinatorial multi-objective optimization, optimization as a service, and learning using genetic programming ~\cite{DBLP:journals/cogcom/OsabaSMH22}.

Evolutionary algorithms are effectively used to solve combinatorial optimization problems, which can often be formulated as submodular monotone problems with a set of constraints ~\cite{DBLP:books/cu/p/0001G14,DBLP:journals/mor/NemhauserW78}. In real-world problems, these are often ones that can be modeled in terms of diminishing returns, but they also encompass theoretical problems such as max coverage ~\cite{DBLP:journals/ipl/KhullerMN99} and max cut ~\cite{DBLP:journals/jacm/GoemansW95}. These functions can be seen as the discrete version of the continuous concept of convexity.

Maximizing these functions with constraints, even ones as simple as a constraint on the number of elements you can choose, is NP-hard. The evolutionary algorithm used in this analysis is the GSEMO (global simple evolutionary multi-objective optimizer), a multi-objective optimizer that can be successfully adapted to single-objective problems. This approach avoids the solution getting trapped in local optimal like EAs such as the $(1+1)$-EA do, although archive-based methods have shown improvements that bring it up to the same performance as GSEMO and other multi-objective approaches ~\cite{DBLP:conf/ppsn/NeumannR24}.

To investigate these problems, we use both rigorous runtime analysis and in depth experimentation with a statistical analysis. In depth explanations for the methodology behind the theoretical aspects can be found in ~\cite{DBLP:series/ncs/2020DN,DBLP:conf/gecco/Witt14a}.

In this paper, we contribute to the fundamental understanding of how the multitasking Pareto approach works when maximizing submodular monotone problems with knapsack constraints. We theoretically analyze the case with unit constraints before experimentally investigating a broader ranger of knapsack constraints.

\subsection{Related Work}

Evolutionary multitasking literature has mostly been focused on continuous problems ~\cite{DBLP:journals/tcyb/LinLTG21,DBLP:journals/tcyb/LinWMGLC24,SUN2023504,DBLP:journals/isci/HuLSM22}, these often contain sophisticated algorithms often utilizing neural networks and deep learning. While effective, analysis of the dynamics behind the improvements are often intractable and so an understanding of how these approaches actually work are lacking. Literature on combinatorial problems is more sparse, with work often focused on variations of the TSP or Knapsack problem ~\cite{DBLP:conf/gecco/WigneyNON25,DBLP:conf/gecco/Don0025,7848632}. There are currently no papers on evolutionary multitasking that focus explicitly on submodular problems.

There are limited theoretical evolutionarily multitasking works, with results on knowledge transfer ~\cite{DBLP:conf/cec/ScottJ24,DBLP:conf/foga/ScottJ23} and for classical benchmarking problems such as OneMax, LeadingOnes, and Jump~\cite{DBLP:conf/gecco/Lengler0N25}. This means the focus of the literature so far has been mostly on the improvements seen rather than on understanding the mechanisms behind multitasking and how it differs compared to the classical methods for optimizing.  

Previous work on runtime analysis for GSEMO and simple single-objective evolutionary algorithms such as $(1+\lambda)$-EA and $(1+1)$-EA have typically been done by considering single unweighted uniform constraints and general cost constraints ~\cite{DBLP:journals/ec/FriedrichN15,DBLP:journals/ai/RoostapourNNF22,DBLP:conf/ppsn/NeumannR24}. Works that include arbitrary weights often consider chance constraints rather than deterministic uniform constraints ~\cite{NeumannNeumannTCS23,DBLP:conf/ppsn/YanNN24}. There is also a growing body of work that considers matroid constraints but only in the static case ~\cite{DBLP:journals/ai/QianYTYZ19,DBLP:conf/ppsn/DoN20}. The more complex constraint types often require sophisticated methods that are not needed in the simple uniform constraint cases; however, they all give reasonable upper bounds on the runtime to reach a $\left(1-\frac{1}{e}\right)$-approximation. More advanced Pareto approaches have also been considered, often removing a factor from the expected time ~\cite{DBLP:conf/ijcai/Crawford21}.

Classically, these constraints are considered to be static throughout the runtime of the algorithm, however literature around dynamic constraints is sparse. Dynamic bounds with approximated costs and approximately submodular functions were analyzed for POMC (Pareto Optimization for maximizing a Monotone function with a monotone Cost constraint)  ~\cite{DBLP:journals/ai/RoostapourNNF22}. Works improving POMC have been considered for dynamic cost constraints ~\cite{DBLP:conf/ijcai/0001Q0021}, with a focus on on more efficient approximation guarantees via the new algorithm FPOMC (Fast Pareto Optimization for maximizing a Monotone function with a monotone Cost constraint).

Excluding both $(1+\lambda)$-EA and $(1+1)$-EA, most of these algorithms are multi-objective Pareto optimization based approaches. They work by building up a Pareto front for the problem, which ensures that optimal solutions remain in the population while suboptimal solutions don't get included.
Due to this, the population size is a significant factor that determines the runtime bounds as noted in previous papers ~\cite{DBLP:conf/ppsn/NeumannR24,DBLP:conf/ppsn/YanNN24}, as it controls the number of points and thus gives the upper bound on the Pareto front. As the Pareto front grows exponentially large as the number of trade-offs increases, for a multitasking algorithm the problems need to be correlated to achieve a benefit compared to in the classical approach where they are done separately ~\cite{Tan2023}.

\subsection{Our Contribution}

We contribute to the theoretical understanding by giving the first insight into how these multitasking evolutionary algorithms function on submodular monotone functions with uniform linear constraints. We provide runtime bounds on multitasking Pareto optimization for solving multiple submodular monotone problems with these constraints. For each problem set we find the population size and use it to derive exact upper bounds on the runtime. We prove this for the most general case with $k$ problems. It's because of the correlation between the constraints, as well as the nature of the problems, that we can efficiently do so.

The runtime analysis shows the multitasking approach can have a superior runtime in some scenarios compared to the classical approach by at most factor of $k$ and at worst has an equal runtime. This is because the Pareto approach means that we can share the population and by finding the largest of the problems, we get all the lower order approximations for free. As the primary driver for the bounds is the size of the population, problems that are subsets of the most general problem, by sharing a bound for example, are trivial to derive.

To give insight into the dynamics of the multitasking approach, an experimental analysis was done applying the NP-hard max cover problem to multiple social graphs. We consider multiple problems and maximize them all classically as well as using our multitasking regime. Unlike in the theoretical case, we loosen our restrictions on the weights and allow them to vary between items. We show from this, the benefits of the multitasking approach in practice for constrained compute scenarios when the bounds are close together, similar to the runtime analysis comparison. For the uniform linear constraint case when the problems bounds are spread out and the number of generations is low, the multitasking approach gives worse approximations for the much smaller bounded problems. The middle of the range bounds were shown to perform well up until the largest generations. For the non-uniform constraints the multitasking approach gave roughly equivalent results. This indicates that in practice, multitasking GSEMO's effectiveness is highly dependent on the similarity and specifics of the problems.

The paper is structured as follows:
Section 2 outlines the preliminary definitions and notation. Section 3 gives the runtime analysis for the unit constraint cases, where we begin with the most general $k$ constraints case and discuss the applicability to other scenarios.  Section 3 concludes with comparisons between our evolutionary multitasking approach and the classical approach. Section 4 contains experimental results and analysis.

\section{Preliminaries}
This section focuses on describing various results and definitions relevant to our analysis. To begin we define what it means for a function to be submodular and monotone

\begin{definition} (Equivalent definitions of submodularity).
	Let $U$ be a finite ground set and $f: 2^U \to \mathbb{R}_0^+$ be a pseudo-boolean objective function. We have that:
\newline
	a) A function $f$ is submodular if for all $A \subseteq B \subseteq U$ and every $x \notin B$
	\begin{equation*}
		f(A \cup \{x\}) - f(A) \geq f(B \cup \{x\}) - f(B)
	\end{equation*}
	b) A function $f$ is submodular if for all $A \subseteq B \subseteq U$ and $a \notin B$
	\begin{equation*}
		f(B) \leq f(A) + \sum_{x \in B \backslash A} (f(A \cup \{x\})-f(A))
	\end{equation*}
    \label{def:sub}
\end{definition}

\begin{definition} (Monotonicity)
    A function $f$ is monotone if for all $A \subseteq B$
\begin{equation*}
		f(A) \leq f(B)
\end{equation*}
\end{definition}

To represent solutions of these functions we use a bit-string of length $n$, defined as $\{0,1\}^n$. Let each element in the ground set correspond to a bit in the bit-string as follows $X \subseteq U$ with $X=\{u_i \in U | x_i = 1, 1 \leq i \leq n \}$.

\begin{definition}
	The largest marginal gain is the increase of $f(x)$ caused by adding a single item (i.e. a single bit) and is defined as 
	\begin{equation*}
\delta_{i+1} = \max_{x \in OPT \setminus X_i} (f(X_i \cup \{x\}) - f(X_i))
\end{equation*}

	Where $OPT$ is an optimal solution and $X_i$ is a feasible solution. 
\end{definition}

A cost constraint is called a uniform cost constraint if it counts the number of $1$-bits in a solution, $c(x) = \sum_{i=1}^n x_i = |x|_1$. It is called a linear weighted uniform constraint if it has a weight applied to this count, $c(x) = a \cdot \sum_{i=1}^n x_i = a \cdot |x|_1$ where $a \in \mathbb{R}^+$. This weight $a$, can be normalized into the bound $B$ such that $c^*(x) = |x|_1$ and $B' = B/a$ if needed. We will keep them non-normalized to keep the results more practically useful and simplify further extensions looking into different constraints and costs. When these weights are $1$ we call them unit weights. We will only be considering uniform cost constraints in the runtime analysis and for simplicity use the term constraint to refer to the cost constraint function and bound together.

These constraints are special cases of more general knapsack constraints of the form $c(x) = \sum_{i=1}^n a_i x_i$.

\begin{definition}
	(General Problem Form). Given a submodular monotone function $f$, with a cost function $c: \{0,1\}^n \to \mathbb{R}^+$ that is a linear weighted uniform constraint and a bound $B \in \mathbb{R}^+$, we want to find
	\begin{equation*}
		\underset{x \in \{0,1\}^n}{\max} f(x) \text{ such that } c(x) \leq B
	\end{equation*}
\end{definition}

\begin{definition}
    (Multitasking Problem Form). Given a submodular monotone function $f: \{0,1\}^n \to \mathbb{R}$, $k$ cost functions $c_i: \{0,1\}^n \to \mathbb{R}$ for $i \in \{1, \dots, k\}$ that are linear weighted uniform constraints, and $k$ bounds $B_i \in \mathbb{R}^+$, the goal is to find a set of $k$ solutions $\{x_1, \dots, x_k\}$ such that for each $i \in \{1, \dots, k\}$:
    \begin{equation*}
        x_i \in \underset{x \in \{0,1\}^n}{\text{max}} f(x) \text{ such that } c_i(x) \leq B_i
    \end{equation*}
\end{definition}

This multitasking form can be represented as the problems defined by the triplet $(f, c_i, B_i)$ where $i \in \{1, \dots, k\}$ indexes each problem.

An example of a submodular monotone function is the maximum coverage problem. We have $G=(V,E)$ where $G$ is an undirected graph with $n=|V|$ nodes. Let every $u \in V$ have a corresponding cost $c_v(v)$ and set of nodes comprising of $v$ and their neighbors $N(v)$. Then we want to maximize the coverage given by the set of nodes $x$,

\begin{equation}
    \text{Coverage}(x) = \left|\bigcup_{i=1, \, x_i=1}^n N(v_i)\right|
    \label{eqn:coverage}
\end{equation}
such that the cost constraint $c(x)=\sum^n_{i=1} (c_v(v_i) \cdot x_i) \leq B$ holds.

As maximizing these functions with constraints is NP-hard, we need to define the concept of an approximation of the optimal solution. Let $OPT_i$ be an optimal feasible solution for problem $(f, c_i, B_i)$. Our goal is to compute for each $i \in \{1, \ldots, k\}$ a solution $x$ with $f(x) \geq \alpha \ OPT_i$ and $c_i(x) \leq B_i$.

By using a multi-objective algorithm, we can set the secondary objectives to maximize negative the number of $1$'s considering the weights of the given cost. This allows us to maximize $f$ while minimizing $c$, while also only including feasible solutions. We do this by introducing the concept of Pareto domination and a new objective function.

\begin{definition} (Pareto domination).
	Let $g=(g_1, \ldots, g_k) \colon \{0,1\}^n \rightarrow \mathds{R}^k$ be an objective function with $k$ objectives that should be maximized.
	Given two search points $x, y \in \{0,1\}^n$, we say that $x$ weakly dominates $y$ ($x \succeq y$) iff $g_i(x) \geq g_i(y)$, $1 \leq i \leq k.$ We say that $x$ dominates $y$ ($x \succ y$) iff $x \succeq y$ and $g_i(x)>g_i(y)$ for at least one $i \in \{1, \ldots, k\}$.
\end{definition}

The general problem form only contains one cost and one bound; however we investigate cases where there are multiple constraints and bounds. When there are $k$ different costs and bounds they are be represented in the form: $c_i(x) = a_i \cdot |x|_1 \leq B_i$, where $|x|_1$ is the count of $1$'s in a solution and $B_i$ is positive and $i \in \{1, \ldots, k\}$.

\begin{definition}
    The general multi-objective objective function GSEMO attempts to solve is
    \begin{equation*}
        g(x) = (g_1(x), g_2(x), \dots, g_{k+1}(x))
    \end{equation*}
    Where $g_1(x)$ is the primary submodular monotone function and $\{g_2(x), \dots, g_{k+1}(x)\}$ is the set of negative cost functions to be maximized.
\end{definition}

\begin{definition}
\label{classic:pri}
	The classical primary objective function is defined as:
	\begin{equation*}
		g_1(x) = 
		\begin{cases}
			f(x), & \text{if}\ x \in X \land c(x) \leq B \\
			-1,   & \text{else}                          
		\end{cases}
	\end{equation*}
\end{definition}
In the classical formalization, to ensure that the cost is minimized we have that $g_2(x) = -c(x)$ as this is equivalent to maximizing the number of $0$-bits in a solution.

\begin{definition}
\label{pri:multitask}
	The most general multitasking primary objective function shared by each $i \in \{1, \dots, k\}$ problems is defined as:
	\begin{equation*}
		g_1(x) = 
		\begin{cases}
			f(x), & \text{if }\ x \in X \land \exists i \in \{1, \ldots, k\} \text{ with} \ c_i(x) \leq B_i \\
			-1,   & \text{else}     
		\end{cases}
	\end{equation*}
\end{definition}

In our multitasking formulation, we consider multiple problems where for $k$ problems we have $k$ secondary functions defined as the negative of the individual cost functions, such that $g_{i+1}(x) = -c_i(x)$ for $i \in \{1, \dots, k\}$. This means that overall we have $k+1$ different objective functions, where the primary objective function, the submodular monotone function, is fully shared between the problems and only needs to be evaluated once per generation.

\begin{algorithm}[tb]
    \raggedright
    \caption{GSEMO}
    \label{alg:gsemo}
    \begin{algorithmic}[1]
        \STATE Choose $x \in \{0,1\}^n$ uniformly at random;
        \STATE $P \gets \{x\}$;
        \STATE $t \gets 0$;
        \REPEAT
            \STATE Choose $x \in P$ uniformly at random;
\STATE Create $y$ by flipping each bit of $x$ independently with probability $1/n$;
\IF{$\nexists w \in P : w \succ y$}
                \STATE $P \gets (P \setminus \{z \in P \mid y \succeq z\}) \cup \{y\}$;
            \ENDIF
            \STATE $t \gets t + 1$;
        \UNTIL{$t \geq t_{max}$}
        \RETURN $P$;
    \end{algorithmic}
\end{algorithm}

The general method for each proof is to first setup the problem by defining the constraints and the bounds to give us $B_{max}$, the number of possible $1$-bits a feasible solution can have also referred to as the maximum bound. We then derive the population upper bound $U$. Then we can find the time needed until the first Pareto optimal individual $0^n$, the individual with only $0$ bits, is in the population. Finally, we use induction to find the time until the rest of the Pareto front is found. We measure runtime as the number of evaluations of the function $f$. The expected time then is the expected number of fitness evaluations of the function $f$. We primarily give the results in terms of $U$ rather than $B_{max}$ as it makes for a more natural representation of that the upper bound is and in our case they are asymptotically the same.

If the constraints were more costly, then we would need to take them into account, but we assume that the function evaluation is the most expensive operation. Then for large enough generations, the cost of the comparisons is comparably small. We also do not collapse the constraints down when this is possible both because the experimental analysis shows that in practice this equivalence does not occur even for a large number of generations and also because it allows for knapsack constraints, which are considered in the experimental section as an extension but not in the theoretical analysis section.

\section{Theoretical analysis}

In this section we consider optimizing multiple monotone submodular problems with uniform linear constraints. We begin with the general case where we have $k$ different constraints.

Let $f$ be a monotone submodular function, $c_i(x) = a_i \cdot |x|_1$ be the cost functions, and $B_i$ be their constraint bounds where $i \in \{1,\dots, k\}$. Thus, in this case we have the following multi-objective problem:
\begin{equation*}
	\label{k_static}
	\underset{x}{\max} \ (g_1(x), -a_1 |x|_1, -a_2 |x|_1, \ldots, -a_k |x|_1),
\end{equation*}

\begin{theorem}
\label{thm:1}
	The population size of GSEMO solving the monotone submodular problems $(f, c_i, B_i)$, for $i \in \{1, \dots, k\}$, is upper bounded by $U = \min(\max_{1 \le i \le k}\left(\floor*{\frac{B_i}{a_i}}\right), n) + 1$.
\end{theorem}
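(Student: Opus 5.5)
The plan is to show that every objective vector in the population is determined by one integer, the number of $1$-bits, and that this integer takes only few values. Throughout, the population $P$ of GSEMO (Algorithm~\ref{alg:gsemo}) is a set of mutually non-dominated search points with respect to $g=(g_1,-a_1|x|_1,\dots,-a_k|x|_1)$. A point $y$ is inserted only if no $w\in P$ dominates it, and then every $z\in P$ with $y\succeq z$ is removed.

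\textbf{Key observation.} Since all $a_i>0$, each secondary objective $g_{i+1}(x)=-a_i|x|_1$ is a strictly decreasing function of the single quantity $|x|_1$. So the cost part of $g(x)$ is completely determined by $|x|_1$.

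\textbf{Step 1: one point per number of ones.} Suppose $x,y\in P$ with $x\neq y$ and $|x|_1=|y|_1$. Then all $k$ cost objectives coincide, so the vectors are comparable through $g_1$ alone. Whichever of $x,y$ entered $P$ later weakly dominates the other or is strictly dominated by it. In the first case the older point is removed by the update rule. In the second case the newer point is rejected. Either way we contradict both being in $P$. Hence the map $x\mapsto|x|_1$ is injective on $P$.

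\textbf{Step 2: only few values of $|x|_1$ occur.} By Definition~\ref{pri:multitask}, a point is feasible iff $a_i|x|_1\le B_i$ for some $i$, i.e.\ iff $|x|_1\le\max_i\lfloor B_i/a_i\rfloor$. Also $|x|_1\le n$ trivially. So feasible points have $|x|_1\in\{0,\dots,\min(\max_i\lfloor B_i/a_i\rfloor,n)\}$, a set of exactly $U$ values. Together with Step 1, at most $U$ feasible points can be in $P$.

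\textbf{Step 3: infeasible points.} This is the main obstacle, since the initial random point may be infeasible with $g_1=-1$. An infeasible point $x$ has $g_1(x)=-1$ and is dominated by any feasible $y$ with $|y|_1\le|x|_1$. Every feasible $y$ has $|y|_1$ at most the maximum feasible count, which is below $|x|_1$, so once any feasible point is present, all infeasible points are removed and can never re-enter. Before a feasible point appears, all points of $P$ have $g_1=-1$. Among these, the one with the fewest ones dominates the rest, so $|P|=1\le U$.

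Hence $|P|\le U$ at all times. The only delicate point to state carefully is the transitional argument in Step 3.
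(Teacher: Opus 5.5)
Your proof is correct and follows the same route as the paper: every cost objective $-a_i|x|_1$ depends only on $|x|_1$, so two population members with the same number of ones are compared on $g_1$ alone and at most one survives per count, while feasibility restricts the count to $\{0,\dots,\min(\max_i\lfloor B_i/a_i\rfloor,n)\}$. Your version is more careful than the paper's, which argues only about a parent and its offspring and never treats infeasible points; your Step~3 closes that gap, since a feasible point has $g_1=f\ge 0>-1$ and fewer ones and so dominates every infeasible point, and before any feasible point appears the population is a singleton.
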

\begin{proof}
	We have constraints of the form $a_i |x|_1 \leq B_i$, which implies that the maximum number of bits, $|x|_1$, is $\max_{1 \le i \le k}\left(\floor*{\frac{B_i}{a_i}}\right)$. As $\max_{1 \le i \le k}\left(\floor*{\frac{B_i}{a_i}}\right)$ can be larger than $n$, the maximum value of $|x|_1$, $B_{max}$, is the minimum of either $n$ or $\max_{1 \le i \le k}\left(\floor*{\frac{B_i}{a_i}}\right)$.
	
	To show that there is at most only one individual for each $i \in \{0, $\ldots$, B_{max}\}$, let $x$ be the parent and $y$ be the offspring with the same number of elements, $|x|_1=|y|_1$. Because they have the same number of elements, all the secondary objectives $-a_i |x|_1=-a_i |y|_1$ are equal which fulfills the first part of the domination definition. If $g_1(y) \geq g_1(x)$, then $y$ replaces $x$. Otherwise $y$ is discarded.
	 
	Thus, the upper bound for the population size is $$U = B_{max} + 1 = \min(\max_{1 \le i \le k}\left(\floor*{\frac{B_i}{a_i}}\right), n) + 1$$
\end{proof}

The above proof shows why the Pareto front won't explode exponentially, the secondary objectives are linear combinations of each other (the bit count) and thus collapses nicely into a single dimension.

\begin{theorem}
\label{thm:2}
	 The expected time until GSEMO has obtained, for each monotone submodular problem $(f, c_i, B_i)$ where $i \in \{1, \dots, k\}$, a $\left(1-\frac{1}{e}\right)$-approximation is $\mathcal{O}(Un(\log n + U))$.
\end{theorem}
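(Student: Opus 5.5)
The proof follows the standard GSEMO analysis for monotone submodular maximization under a cardinality constraint, as in Friedrich and Neumann. By Theorem~\ref{thm:1} the population never exceeds $U$ individuals, so in each iteration a fixed individual is selected with probability at least $1/U$. The argument has two phases: first reach $0^n$, then climb the front one bit at a time while keeping the greedy-type approximation invariant. Because every problem $(f,c_i,B_i)$ reduces to a cardinality bound $\lfloor B_i/a_i\rfloor$, and $g_1$ treats a point as feasible if it satisfies any one of the constraints, a single chain of improvements serves all $k$ problems at once. This is where the multitasking gain comes from.

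\textbf{Phase 1: reaching $0^n$.} Consider the individual in $P$ with the fewest $1$-bits. If it is infeasible, $g_1=-1$ and it is compared only through the cost objectives, so a smaller $|x|_1$ is never lost. If it is feasible, the minimal $|x|_1$ among feasible individuals is likewise never lost, since any individual that weakly dominates it must have no more $1$-bits. The minimum number of $1$-bits therefore never increases. It decreases when the minimal individual is selected (probability $\ge 1/U$) and exactly one of its $j$ one-bits flips (probability $\ge j/(en)$). Summing over $j$ from $n$ down to $1$ gives expected time $O(Un\log n)$. The point $0^n$, which satisfies every constraint, then stays in $P$ permanently, because nothing can have fewer $1$-bits.

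\textbf{Phase 2: the invariant.} Fix any $B$ with $B\le B_{max}$; we will use $B=\lfloor B_i/a_i\rfloor$ for each $i$, with $\mathrm{OPT}$ the optimum for cardinality $B$. The invariant is: for $j=0,1,\dots$, the population contains a feasible $x$ with $|x|_1\le j$ and
\[ f(x)\ge\left(1-\left(1-\tfrac1B\right)^{j}\right)f(\mathrm{OPT}). \]
This property is maintained. The individual holding it at bit count $|x|_1$ can only be replaced by a point with at most as many $1$-bits and at least as large an $f$-value.

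To advance from $j$ to $j+1$, select $x$ (probability $\ge 1/U$) and flip exactly the element of $\mathrm{OPT}\setminus X$ with the largest marginal gain $\delta$ (probability $\ge 1/(en)$). Submodularity (Definition~\ref{def:sub}b) together with monotonicity gives $f(\mathrm{OPT})\le f(x)+B\delta$. Hence the offspring $y$ satisfies $f(y)\ge (1-1/B)f(x)+f(\mathrm{OPT})/B$, and the bound moves to $j+1$. The offspring has at most $j+1\le B\le B_{max}$ ones, so it is feasible for every problem whose bound is at least $B$ and receives $g_1=f(y)$. It is then either accepted or weakly dominated by an individual with no more ones and at least equal $f$, and either way the invariant holds at $j+1$. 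If $\mathrm{OPT}\setminus X$ is empty, then already $f(x)\ge f(\mathrm{OPT})$.

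Each step takes expected time $O(Un)$, and there are at most $B\le U$ steps, so this costs $O(U^2n)$. After $B$ steps, $(1-1/B)^B\le 1/e$, which yields the required approximation with $|x|_1\le B$, i.e.\ $c_i(x)\le B_i$.

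\textbf{Combining the problems.} It suffices to run the argument once with $B=B_{max}$. Every intermediate stage $j=\lfloor B_i/a_i\rfloor$ is passed along the way. I expect the main obstacle to be that the invariant with parameter $B_{max}$ does not directly give $1-1/e$ relative to $\mathrm{OPT}_i$ at stage $\lfloor B_i/a_i\rfloor$. The cleanest fix is to run the invariant separately for each $i$, with its own $B$ and $\mathrm{OPT}_i$. The $k$ arguments need not be sequential: they concern the same population, and each finishes within expected $O(Un(\log n+U))$. A union-over-phases argument, or a Markov/restart argument with tail bounds, controls the maximum of the $k$ completion times. To keep the bound free of a factor $k$ or $\log k$, I would instead use a single fitness-level potential over the bit count $j\le B_{max}$: record, for each $j$, the best $f$-value in $P$ among points with at most $j$ ones. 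Then argue that one step along the chain improves all per-$i$ invariants at once, since for each $i$ the best individual with $j$ ones is the same point. Making this simultaneous improvement rigorous is the delicate step. Summing the two phases gives $O(Un\log n+U^2n)=O(Un(\log n+U))$.
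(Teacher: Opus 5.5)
Your Phase 1 and your Phase 2 for a single cardinality bound are correct and match the paper's argument. The fewest-ones individual gives $O(Un\log n)$. The greedy-type invariant, advanced by a single-bit flip with probability at least $1/(eUn)$ over at most $U$ levels, gives $O(U^2n)$. For combining the $k$ problems, the paper simply asserts that once the chain reaches $B_{max}$ all lower-order approximations are present. You correctly flag that this needs an argument, but you leave it open (``the delicate step''), so as written the proposal does not establish the bound for all $k$ problems without an extra factor. The obstruction is concrete. Your advancing step flips the best element of $\mathrm{OPT}_i\setminus X$, which depends on $i$, so an accepted offspring is only guaranteed to advance the invariant of that one $i$. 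Your observation that all invariants at level $j$ are witnessed by the same point is correct, but it does not by itself make them advance together. The fallback of $k$ separate arguments plus a union-over-phases or Markov/restart argument is not carried out, and Markov plus a union bound loses a factor $k$.

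The missing idea is to flip the globally best element rather than the best element of $\mathrm{OPT}_i\setminus X$. Let $b_i=\min(\lfloor B_i/a_i\rfloor,n)$; problems with $b_i=0$ are solved by $0^n$. Let $x_j$ be the member of $P$ with the largest $f$-value among those with $|x|_1\le j$, so $f(x_j)$ never decreases. Maintain the single invariant $f(x_j)\ge\max_i\bigl(1-(1-1/b_i)^j\bigr)f(\mathrm{OPT}_i)$. With $X$ the set of $x_j$, flip $e^*\in\arg\max_{e\notin X}\bigl(f(X\cup\{e\})-f(X)\bigr)$. This is still one specific bit, flipped alone with probability at least $1/(en)$. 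Its gain is at least $\max_{e\in\mathrm{OPT}_i\setminus X}\bigl(f(X\cup\{e\})-f(X)\bigr)\ge\bigl(f(\mathrm{OPT}_i)-f(X)\bigr)/b_i$ for every $i$ at once. Hence the offspring $y$ is feasible, has $|y|_1\le j+1$, and satisfies $f(y)\ge(1-1/b_i)f(x_j)+f(\mathrm{OPT}_i)/b_i$ for all $i$. Whether or not $y$ is accepted, $f(x_{j+1})\ge f(y)$, so the invariant moves to level $j+1$ for all $k$ problems simultaneously. After at most $B_{max}=U-1$ such advances, each with expected waiting time $O(Un)$, every $x_{b_i}$ is a feasible $(1-1/e)$-approximation for problem $i$ and remains one. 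This gives $O(U^2n)$ with no dependence on $k$. Your per-$i$ route can also be closed, using exponential tail bounds for sums of geometric waiting times and a union bound over the at most $n+1$ distinct values of $b_i$. That adds only $O(Un\log n)$, but the global greedy element is simpler.
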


\begin{proof}
	\label{general_proof}
	We first analyze the time until the search point $0^n$ has been obtained for the first time. This search point has the smallest cost with respect to all constraint functions and stays in the population once added. Furthermore, the search point $0^n$ dominates any infeasible solution and infeasible solutions are therefore rejected once the search point $0^n$ has been added to the population.
    
	Consider $\ell = |y|_1$ to be the individual with the smallest number of $1$-bits in the population $P$. Due to the dominance in the Pareto front, it can never increase over the run of the algorithm. In each step, a single bit flip alone occurs with a probability of at least
    $$
    \frac{1}{n}\left( 1 - \frac{1}{n} \right)^{n-1} \geq \frac{1}{e}
    $$
    As there are $n$ bits per element and $y$ has $\ell$ $1$'s, the probability that $y$ decreases by 1 is $\ell / en$. The probability that GSEMO selects $y$ is at least $1/U$. Using the methods of fitness-based partitioning, the upper bound for the expected runtime until $0^n$ is found is therefore 
	
	\begin{equation}
		\label{eqn:first}
		\sum^n_{\ell=1} \left(\frac{\ell}{enU}\right)^{-1} = \mathcal{O}(Un \log n)
	\end{equation}
	
	We show that at the end of the runtime there exists a $\left(1-\frac{1}{e}\right)$-approximation for every $b \in \{0, $\ldots$, B_{max}\}$, which by definition includes each problem.

    For every $i \in \{0, $\ldots$, b\}$ and some $b\in \{0, $\ldots$, B_{max}\}$, we want to show that 
	\begin{equation}
		\label{eqn:assum}
		f(X_i) \geq \left(1-\left(1-\frac{1}{b}\right)^i\right) \cdot f(OPT_b)
	\end{equation}
	holds, such that $OPT_b$ is a feasible optimal solution for each $b$ which includes all of the constraints. Once $0^n$ is found, this holds for $i=0$. Assume the inductive hypothesis, that this equation does hold for a given $X_i$.
	
	We need to find a bound on the optimal solution and by using the second definition of submodularity along with noting that an optimal feasibly solution $OPT_b$ contains at most $b$ elements, the function's monotonicity and the definition of the largest marginal increase
	\begin{equation*}
		\begin{split}
			f(OPT_b) & \leq f(X_i \cup OPT_b) \\
			& \leq f(X_i) +\sum_{x \in OPT_b\backslash X_i} (f(X_i \cup \{x\})-f(X_i)) \\
			& \leq f(X_i) + |OPT_b| \delta_{i+1} = f(X_i) + b \delta_{i+1}
		\end{split}
	\end{equation*}
	Rearranging this gives that the next marginal gain is bound by
    $\delta_{i+1} \geq \frac{1}{b} (f(OPT_b)-f(X_i))$.

	Again, using the monotonic and submodular nature of $f$
	\begin{equation*}
		f(X_{i+1}) \geq f(X_i) + \delta_{i+1} \geq f(X_i) + \frac{1}{b} (f(OPT_b)-f(X_i))
	\end{equation*}

	Considering the assumptions about $f(X_i)$ from Equation \ref{eqn:assum} we get
	\begin{equation*}
		\begin{split}
			f(X_{i+1}) & \geq f(X_i)\left(1-\frac{1}{b}\right)+\frac{1}{b}\cdot f(OPT_b)\\
			& \geq \left(1-\left(1-\frac{1}{b}\right)^{i+1} \right) \cdot f(OPT_b) \\
		\end{split}
	\end{equation*}
	When the final solution is found, $i=b$, we have 
    $$f(X_b) \geq \left(1-\left(1-\frac{1}{b}\right)^{b}\right) \cdot f(OPT_b) \geq \left(1-\frac{1}{e}\right) \cdot f(OPT_b)$$ for any $b\in \{0, \dots, B_{max} \}$
    If $b = B_{max}$, then all lower-order approximations must exist in the population as we have shown that approximations have been found for all $b$. Therefore approximations for all the monotone submodular functions have been found. 
	
	The time to achieve the final $\left(1-\frac{1}{e}\right)$-approximation for $B_{max}$ depends on the number of steps required to progress. For each of these steps from $i$ to $i+1$, we have the probability of selecting the best individual as at least $1/U$ and the probability of flipping the right bit as at least $1/(en)$ as we only need the element with the largest marginal increase to be flipped. We repeat each step at most $U$ times so we get the runtime upper bound to be
	
	\begin{equation}
		\label{eqn:second}
		\sum^{U}_{i=0} \left(\frac{1}{Uen}\right)^{-1} = \mathcal{O}(n \ U^2)
	\end{equation}
	
	Putting the two upper bounds from Equations \ref{eqn:first} and \ref{eqn:second} together, we get $ \mathcal{O}(U n \log n) + (n \ U^2) = \mathcal{O}(Un (\log n + U)$. 
\end{proof}

Often, it's useful to start the algorithm from a known initial search point such as $0^n$. From the previous proof we can derive the runtime for this trivially.
\begin{corollary}
	If $0^n$ is the initial search point rather than a random one, the expected time until GSEMO has obtained, for each monotone submodular problem $(f, c_i, B_i)$, where $i \in \{1, \dots, k\}$, a $\left(1-\frac{1}{e}\right)$-approximation is $\mathcal{O}(U^2n)$.
\end{corollary}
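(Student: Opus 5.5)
The corollary follows by re-running the two-phase argument from the proof of Theorem~\ref{thm:2} and observing that the first phase disappears. In that proof the runtime bound $\mathcal{O}(Un(\log n + U))$ is the sum of two parts. Equation~\ref{eqn:first} bounds the time until $0^n$ enters the population by $\mathcal{O}(Un\log n)$. Equation~\ref{eqn:second} bounds the remaining time, counted from the moment $0^n$ is present, by $\mathcal{O}(nU^2)$. If the initial search point is $0^n$, the first part costs zero evaluations, so we only need to confirm that the second part still applies unchanged.

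First, we check the initial state. The point $0^n$ satisfies $c_i(0^n)=0\le B_i$ for every $i$, so $g_1(0^n)=f(0^n)\ge 0$ by Definition~\ref{pri:multitask}. It also attains the maximal value $0$ in every secondary objective $-c_i$. Hence no search point can dominate it, and it stays in the population for the whole run. Every infeasible point has $g_1=-1<g_1(0^n)$ and secondary values no larger than those of $0^n$, so it is dominated and never accepted. Consequently the population contains only feasible points, and by Theorem~\ref{thm:1} its size never exceeds $U$.

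Second, we rerun the inductive argument of Theorem~\ref{thm:2}. Fix $b\in\{0,\dots,B_{max}\}$. The base case $i=0$ holds immediately because $X_0=0^n$ is in the population at time $0$. For the inductive step from $i$ to $i+1$, GSEMO selects the current $X_i$ with probability at least $1/U$. It then flips exactly the bit achieving $\delta_{i+1}$ with probability at least $1/(en)$. The resulting offspring has $i+1$ one-bits and $f$-value at least that required by Equation~\ref{eqn:assum}. It is therefore either accepted, or some individual with the same number of one-bits and at least this $f$-value is already present. So the invariant advances with probability at least $1/(eUn)$ per step, giving expected waiting time at most $eUn$. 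We need at most $B_{max}\le U$ such advances to reach $i=B_{max}$. Summing as in Equation~\ref{eqn:second} gives $\mathcal{O}(U^2 n)$.

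Finally, since all $b\le B_{max}$ are covered, every problem $(f,c_i,B_i)$ obtains a $\left(1-\frac{1}{e}\right)$-approximation, exactly as argued at the end of the proof of Theorem~\ref{thm:2}. The only point needing care is that the $\mathcal{O}(Un\log n)$ term from Equation~\ref{eqn:first} has vanished. This holds because that term arose solely from driving the minimum number of one-bits down to zero, which is unnecessary when the run starts at $0^n$. There is no substantial obstacle beyond confirming that the population bound $U$ and the fitness-level argument still apply from the first step.
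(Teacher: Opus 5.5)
Your plan is the paper's own: it obtains the corollary by deleting the $\mathcal{O}(Un\log n)$ phase of Equation~\ref{eqn:first} and keeping the $\mathcal{O}(nU^2)$ phase of Equation~\ref{eqn:second}. Your preliminary checks are also correct: $0^n$ is never removed, infeasible offspring are rejected, and the population never exceeds $U$. However, when you write out the second phase, two steps do not hold as stated; both mirror looseness in the paper's proof of Theorem~\ref{thm:2}. First, the claim that the offspring is ``either accepted, or some individual with the same number of one-bits and at least this $f$-value is already present'' is not true in general. The offspring $y$ with $i+1$ one-bits can be strictly dominated by some $w$ with $|w|_1<i+1$ and $f(w)\ge f(y)$. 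The population may then contain no individual with exactly $i+1$ one-bits at all, e.g.\ maximum coverage on a star once the center alone is in the population. The invariant has to read: some population member with \emph{at most} $i$ one-bits has $f$-value at least $\left(1-\left(1-\frac{1}{b}\right)^i\right)f(OPT_b)$. This is never lost, since a member can only be removed by one with no more one-bits and no smaller $f$-value. The inductive step still works because $f(OPT_b)\le f(X)+b\,\delta$ holds whatever $|X|_1$ is.

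Second, and more substantively, you fix $b$, show its chain reaches level $b$ in expected time $\mathcal{O}(U^2n)$, and then conclude ``since all $b\le B_{max}$ are covered''. The corollary bounds the expected time until \emph{all} $k$ problems are solved, which is the expectation of a maximum of $k$ waiting times. The chains for different $b$ need different bits, namely elements of different $OPT_b$. Summing their expectations only gives $\mathcal{O}(kU^2n)$, which is no better than solving the problems separately. The missing idea is to flip the element of largest marginal gain among \emph{all} elements outside the selected individual $X$, not only those in $OPT_b$. Its gain is at least $\frac{1}{b}\left(f(OPT_b)-f(X)\right)$ for every $b$ simultaneously. Concretely, let $h_j$ be the largest $f$-value of a population member with at most $j$ one-bits; this is non-decreasing in time. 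Let $\phi$ be the largest $i$ such that $h_j\ge\left(1-\left(1-\frac{1}{b}\right)^j\right)f(OPT_b)$ for all $j\le i$ and all $b$ with $\max(j,1)\le b\le B_{max}$. While $\phi<B_{max}$, select a member attaining $h_\phi$ and flip only this greedy element. This increases $\phi$ with probability at least $1/(eUn)$, whether or not the offspring is accepted. Hence $\phi=B_{max}$, and every problem is approximated, after expected $\mathcal{O}(U^2n)$ steps. Alternatively, a tail bound for sums of geometric variables plus a union bound over the at most $U$ values of $b$ also recovers $\mathcal{O}(U^2n)$.
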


We also know that the upper bound on the population size is $n$ as our population size never grows exponentially as is sometimes seen in \gsemo{} for more complex scenarios, giving us the following corollaries that show the direct links between the classical and multitasking approaches.

\begin{corollary}
    The expected time until GSEMO has obtained, for each monotone submodular problem $(f, c_i, B_i)$, where $i \in \{1, \dots, k\}$, a $\left(1-\frac{1}{e}\right)$-approximation is $\mathcal{O}(n^2(\log n + U))$.
\end{corollary}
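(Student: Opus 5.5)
The plan is to derive this corollary directly from Theorem~\ref{thm:2}. We substitute the trivial upper bound on the population size into only one of the two phases of that analysis. By Theorem~\ref{thm:1}, $U = \min(\max_{1 \le i \le k}\floor*{B_i/a_i}, n) + 1 \le n+1$, so $U = \mathcal{O}(n)$ regardless of the bounds $B_i$. The statement keeps $U$ in the second factor, so we should not simply replace every $U$ in $\mathcal{O}(Un(\log n + U))$ by $n$. Instead we replace exactly one occurrence and keep the other.

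The first step is to recall the two-phase decomposition in the proof of Theorem~\ref{thm:2}. Phase one is the time until $0^n$ enters the population, bounded in Equation~\ref{eqn:first} by $\mathcal{O}(Un\log n)$. Phase two is the time to build the chain $X_0, X_1, \dots, X_{B_{max}}$ of $(1-1/e)$-approximations, bounded in Equation~\ref{eqn:second} by $\mathcal{O}(nU^2)$. Together they give $\mathcal{O}(Un\log n + nU^2) = \mathcal{O}(Un(\log n + U))$.

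The second step is to factor this as $U \cdot n(\log n + U)$. We then bound only the leading factor $U$, which comes from the selection probability $1/U$, by $n+1 = \mathcal{O}(n)$. This gives $\mathcal{O}(n \cdot n(\log n + U)) = \mathcal{O}(n^2(\log n + U))$, which is the claimed bound. It remains valid because it is a weaker upper bound than Theorem~\ref{thm:2}. Its purpose is to separate the term independent of the bounds ($n^2 \log n$) from the term that depends on the bounds ($n^2 U$), which makes the later comparison with the classical approach direct.

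There is no real obstacle here. The only point requiring care is to apply $U \le n+1$ to one factor only, namely the selection probability, and to leave the number of inductive steps (the inner $U$) untouched, so the bound keeps the intended form rather than collapsing to $\mathcal{O}(n^3)$. We also note that $n+1 = \mathcal{O}(n)$ absorbs the additive constant in $U$.
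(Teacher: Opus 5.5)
Your proposal is correct and follows the paper's own reasoning: the paper obtains this corollary from Theorem~\ref{thm:2} by bounding the population-size factor $U$, which comes from the selection probability, by $n$ while keeping the inner $U$, exactly as you do. You note that $U \le n+1 = \mathcal{O}(n)$, which is slightly more careful than the paper's remark that the population size is at most $n$.
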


\begin{corollary}
    If $0^n$ is the initial search point rather than a random one, then the expected time until GSEMO has obtained, for each monotone submodular problem $(f, c_i, B_i)$, where $i \in \{1, \dots, k\}$, a $\left(1-\frac{1}{e}\right)$-approximation is $\mathcal{O}(Un^2)$.
\end{corollary}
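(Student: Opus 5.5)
The last corollary (start from $0^n$, bound $\mathcal{O}(Un^2)$) follows by reusing the two-phase argument of Theorem~\ref{thm:2} and changing only how the population size is bounded. In that proof the expected time splits into two parts. The first is the time until $0^n$ enters the population, bounded in Equation~\ref{eqn:first}. The second is the time for the greedy-style induction along the chain $X_0 = 0^n, X_1, \dots, X_{B_{max}}$, bounded in Equation~\ref{eqn:second}. With $0^n$ as the initial search point the first phase costs nothing: $0^n$ is Pareto optimal (it minimises every cost $a_i|x|_1$), so it is never removed. The inductive hypothesis of Equation~\ref{eqn:assum} therefore holds for $i=0$ from step $0$.

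For the second phase I would redo the waiting-time estimate with the population bound given by the preceding discussion: by Theorem~\ref{thm:1}, $|P| \le U \le n+1$. I would then argue as follows.

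\begin{enumerate}
\item Fix $b = B_{max}$ and let $i$ be the largest index such that the population holds an individual with $|x|_1 = i$ satisfying Equation~\ref{eqn:assum}. This index never decreases, because an individual with the same number of ones is replaced only by one with at least the same $f$-value.
\item To increase $i$, GSEMO must select that individual, which happens with probability at least $1/|P| \ge 1/(n+1)$. It must then flip exactly the bit of the element with largest marginal gain, which happens with probability at least $1/(en)$.
\item The resulting offspring has $i+1$ ones and satisfies the bound for $i+1$, by the same submodularity chain as in the proof of Theorem~\ref{thm:2}.
\end{enumerate}

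Each improvement therefore takes expected time $\mathcal{O}(n^2)$, and at most $B_{max} \le U$ improvements are needed. This gives $\mathcal{O}(Un^2)$ in total.

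Finally, exactly as in Theorem~\ref{thm:2}, I would use the fact that the bound of Equation~\ref{eqn:assum} is proved for every $b \le B_{max}$. So once the chain reaches $B_{max}$, each problem $(f,c_i,B_i)$ has a feasible $\left(1-\frac{1}{e}\right)$-approximation in the population, namely the individual with $\lfloor B_i/a_i\rfloor$ ones (capped at $n$).

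The main subtlety is this per-$b$ bookkeeping. The individual kept at level $j$ must satisfy the guarantee relative to $OPT_b$ for every relevant $b$ at once. I would handle this as in the earlier proof: run the induction separately for each target $b_i = \min(\lfloor B_i/a_i\rfloor, n)$, while noting that the individual at each level only ever improves in $f$. The waiting-time argument then applies to each target, and the dominant chain is the one for $B_{max}$. The remaining steps are routine. If one instead wants the tighter $\mathcal{O}(U^2 n)$ of the first corollary, the selection probability $1/U$ can be used in place of $1/(n+1)$. The $\mathcal{O}(Un^2)$ form stated here is simply the weaker but $U$-free bound on the selection probability.
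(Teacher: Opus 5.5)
Your route is the paper's own. Starting from $0^n$ removes the first phase of Theorem~\ref{thm:2}, and $U\le n+1$ bounds the selection probability by $1/(n+1)$. The statement even follows directly from the first corollary, since $U^2n\le (n+1)Un$.

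The step that does not go through as written is the multi-target bookkeeping you single out at the end. Suppose each target $b_i$ gets its own induction, with the marginal gain taken over $OPT_{b_i}\setminus X$ as in the paper. Then the chains need different parents and different bit flips, so their completion times $T_i$ are different random variables. Saying that ``the dominant chain is the one for $B_{max}$'' bounds $\max_i \mathrm{E}[T_i]$, not $\mathrm{E}[\max_i T_i]$, and summing over targets costs an extra factor $\min(k,U)$. Completing only the $B_{max}$ chain does not help either. Its individual with $b_i$ ones is guaranteed only $(1-(1-1/B_{max})^{b_i})\,f(OPT_{B_{max}})\le (b_i/B_{max})\,f(OPT_{B_{max}})$. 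That does not give $(1-1/e)\,f(OPT_{b_i})$ when $b_i\ll B_{max}$ and $f(OPT_{B_{max}})$ is close to $f(OPT_{b_i})$. The paper's proof of Theorem~\ref{thm:2} passes over the same point (``all lower-order approximations must exist''), so you inherit this gap rather than create it.

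The clean repair is to flip the element $v^*$ of largest marginal gain over the whole ground set outside $X$. This gain is at least the largest gain over $OPT_b\setminus X$. So the same submodularity chain gives $f(X\cup\{v^*\})\ge(1-\frac{1}{b})f(X)+\frac{1}{b} f(OPT_b)$ for all $b\in\{1,\dots,B_{max}\}$ at once, and one witness per level serves every problem.
\begin{itemize}
\item Say level $j$ holds if some $x\in P$ has $|x|_1\le j$ and $f(x)\ge(1-(1-\frac{1}{b})^j)f(OPT_b)$ for every such $b$.
\item A level that holds keeps holding, because GSEMO removes $x$ only in favour of some $y\succeq x$.
\item Hence the largest $j$ such that levels $0,\dots,j$ all hold never decreases.
\item While $j<B_{max}$, this index increases with probability at least $\frac{1}{n+1}\cdot\frac{1}{en}$ per step: select the level-$j$ witness and flip only $v^*$. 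The offspring is feasible, and if it is rejected, its dominator witnesses level $j+1$.
\end{itemize}
After at most $B_{max}\le U$ increases, every level $b_i$ holds, in expected time $\mathcal{O}(Un^2)$.

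Note the ``$\le j$'' in the definition. With $|x|_1=j$, as in your step~1, the index can drop when the level-$j$ individual is removed by a dominating point with fewer ones.

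Alternatively, you can keep separate chains and bound $\mathrm{E}[\max_i T_i]$ directly. Use a Chernoff bound for sums of geometric variables plus a union bound over the at most $U$ distinct targets.
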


\subsection{Subsets of the general case}

The proofs show that the parameters that play dominant roles in determining the runtime bounds are the upper bound on the population size $U$ and the maximum number of $1$-bits that can be selected $B_{max}$. This is because the solutions can't have more bits then $B_{max}$ and so we find approximations for every possible $1$-count up to $B_{max}$ in $U$. Subsets of the general case that still have uniform constraints are therefore trivial, as all that is needed to derive $U$ and thus runtime bounds using the proof of Theorem \ref{general_proof}, is to know $B_{max}$.

This is because the population size proofs rely on $B_{max}$, the sharing of the primary objective function $f$ and the uniformity of the constraints meaning any combination of constrains will trivially fulfill the secondary objectives.

\subsection{Comparison between methods}

If we have $k$ problems, the expected runtime to find a $(1 - \frac{1}{e})$-approximation for every problem in the multitasking scenario is 
\[
T_{m} = \mathcal{O}(U n(\log n + U)),
\]
where $U = \max_{1 \le i \le k}(U_i)$ is the largest relevant population size across all problems.

In the classical approach, each problem is solved independently, giving $k$ separate runtimes:
\[
T_c = \mathcal{O}\left( \sum_{i=1}^k U_i n(\log n + U_i) \right).
\]

We only have upper bounds, which limits our comparison. But, since for any $i$, $U \geq U_i$, the multitasking approach is never going to be asymptotically worse than the classic approach in terms of upper bounds for solving all of the $k$ problems. The speed up is entirely dependent on the similarity between $U_i$'s. The best speed up comes when $U_1 \approx U_2 \approx \dots \approx U_k$, as in this case the sum in $T_c$ will approach $k$ times the upper bound on $T_m$. In the worst case however, where $U_k \gg \sum^{k-1}_{i=1}$, then $T_c \approx T_m$ and there is no worst case speed up at all.

These results are directly related to the Pareto multitasking approach that allows for sharing the population of the Pareto front. The subsequent $\left(1-\frac{1}{e}\right)$-approximations for every possible bit count up to the largest problems constraint ratio are then only found once and can remain in the population.

\section{Experimental investigations}

To experimentally investigate the multitasking approach, we consider the max cover problem from Equation \ref{eqn:coverage}, with unit weights, uniformly distributed random weights and node degree based weights. To allow for a more controlled comparison, we begin with an initial solution $0^n$.  We test on the social graphs ca-GrQc, Erdos992, ca-HepPh, ca-AstroPh, ca-CondMat, graphs, taken from the Network Repository \cite{DBLP:conf/aaai/RossiA15}. We test over 100000, 500000, 1000000 generations, each run 30 times over 5 different constraints. We denote the mean and standard deviation of the approaches by $\mu$ and $\sigma$, with the subscript $c$ for the classical algorithm and $m$ for the multitasking algorithm.

We assume primary function evaluations of the submodular function dominate the overall compute. Pareto dominance and constraint checks scale efficiently in optimized array implementations and are computationally trivial. Scaling the generation bound by $k$ for the multitasking approach thus gives a fair comparison by comparing between the same number of primary function evaluations per problem as the multitasking approach shares the evaluation function.

We consider three scenarios: unit constraints (aligned with our theoretical analysis), randomly weighted linear constraints, and degree-weighted linear constraints. The unit constraint case is a special case of our theoretical analysis where we normalize the weight that is shared between all items. The latter two cases are more general settings than our theoretical analysis, where the weights can vary between different items. The weights here are given in terms of $c_v$, the vertex costs, however they correspond directly with the weights $a_i$ from our prior analysis. These break the proven upper bound on the Pareo front as the constraints are no longer linear combinations of each other, thus for these results, the analysis and small Pareto fronts in the theoretical section do not apply. Rather, they are included for insight into practical dynamics.

For the unit constraint case $c_v = 1$, we start with a trivial bound of $1$ all the way up to large non-trivial bounds. We pick the trivial bound to show that there are statistically significant differences between the two approaches even in such a situation. For the randomly weighted linear constraint case, we scale the bounds by $100$ and sample the weights for each item $u_i \sim \mathcal{U}(50, 150)$ such that $c_v = \ceil{u_i}$, normalizing back to the original bound once complete. If this was done as sampling from $\mathcal{U}(0.50, 1.50)$ and using the unscaled bounds, we would face rounding errors. This scenario allows us to to investigate how multitasking works when using uncorrelated weights. For the degree-weighted linear constraint case, we use the bounds from the linear constraint case but weigh the items based on their node degree. That is, $c_v = \text{deg}(v)$.

Kruskal-Wallis tests are used to check for significance between the classical and multitasking cases. These are rank based tests and don't rely directly on the mean or median, with $p \leq 0.05$ chosen as determining significance.

\subsection{Results}

\subsubsection{Unit constraints}

From Table \ref{tab:combined_weights} focusing first on the unit constraint case, we can see patterns emerge that show that experimentally the multitasking approach isn't always superior. The lower bounds tend to do worse and occasionally the middle bounds tend to be worse at the largest number of generations. This is notable as we can see these results are consistent amount all the graphs, which have different sizes, density and topology. Even for the trivial single item constraint, the multitasking approach is significantly worse. Interestingly, multitasking GSEMO does better for the non-unit weight cases.

\begin{figure}[t]
    \centering
    \subfloat[Comparison between the classical and multitasking approaches for two different sets of generations looking at the problem with $B_{max} = 415$.\label{fig:good}]{%
        \includegraphics[width=0.48\textwidth]{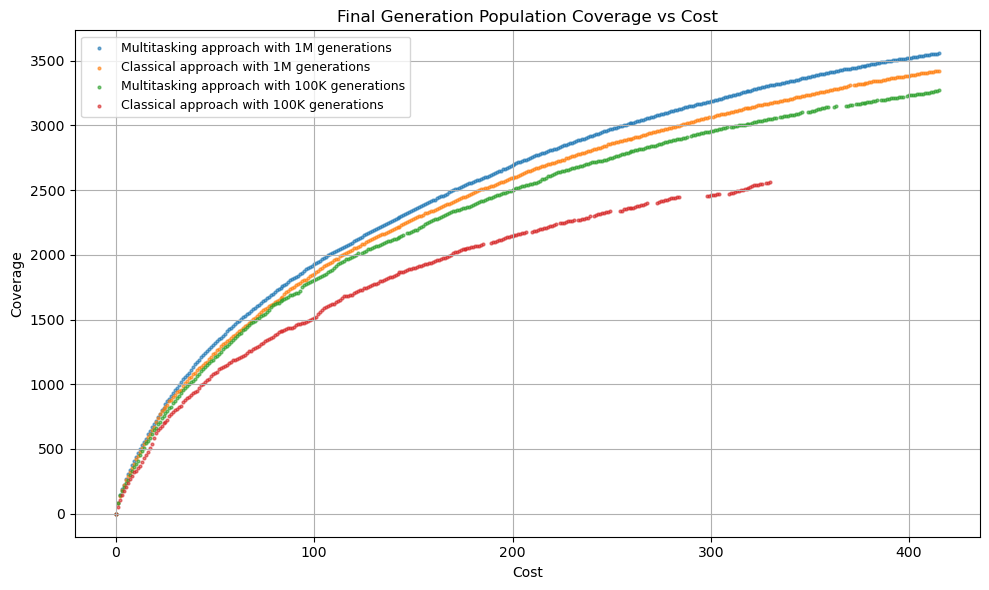}%
    }\hfill
    \subfloat[Mean coverage of 4 problems with problem bounds $50$, $100$, $200$ and $300$ optimized over $200$k generations.\label{fig:bad}]{%
        \includegraphics[width=0.48\textwidth]{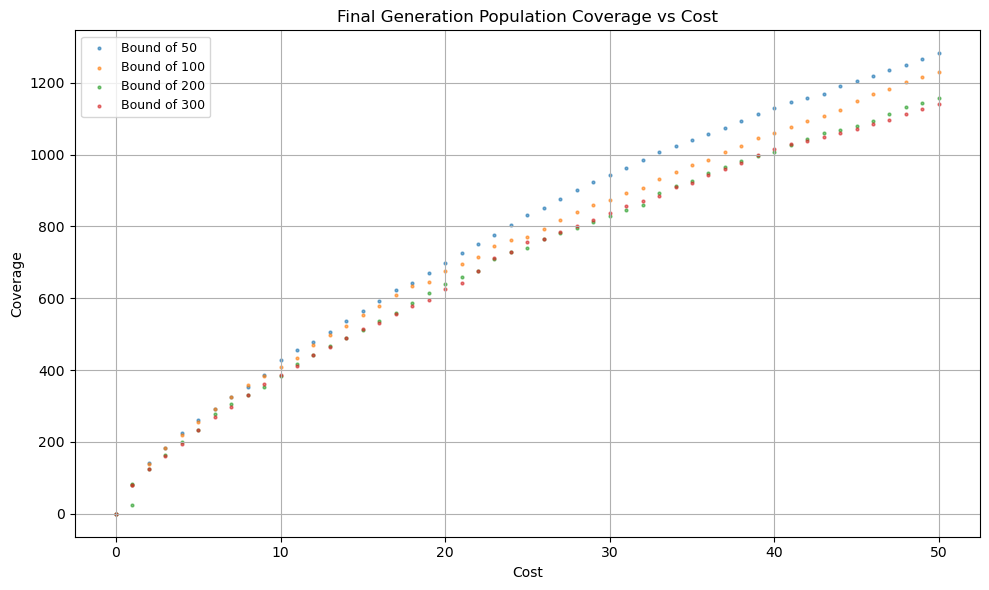}%
    }
    \caption{Performance and coverage comparisons. (a) Generation comparison for $B_{max} = 415$. (b) Solution coverage over 200k generations.}
    \label{fig:combined_figures}
\end{figure}

Figure \ref{fig:good} shows how for the largest problem, the multitasking approach gives a vastly improved result. This is because it's able to share the population and function evaluations with the rest of the problems, giving it effectively more compute. For the smaller constraints, the larger population size of the multitasking approach causes a disadvantage, as the search space is much larger meaning it doesn't spend as much improving the solutions for these smaller constraints. This means that the classical approach can pick more optimal solutions much easier as it focuses on the reduced area while the multitasking approach spends more time improving other problems. To visualize this, Figure \ref{fig:bad} shows several uniform linear constraint problems optimized using the classical approach, we can see that the smaller problems give better coverage for the lowest of the bounds.

When the multitasking approach is worse at the largest number of generations, it's likely due to the classical approach being able to focus more on the constrained search points due to the smaller population size as with the small bounds. In this case, the sharing of populations means that in the initial stages the penalty from this larger search space is small enough that the advantage of sharing the function evaluations outweighs it. But as the approximations improve, the sharing of the function evaluations becomes less valuable as the multitasking algorithm spends time searching for, and improving, solutions that aren't feasible for these middle size problems.

The largest bound is always better optimized using the multitasking algorithm. This is expected, as essentially, we are just solving it but with $5$ times the bound, as all lower problems are sharing this compute budget. The second largest problem also always does significantly better with the multitasking algorithm, which gives weight to the idea that the closer together the bounds are, the more effective the multitasking approach is. Likely due to the penalty from the larger search space being restricted by the cost savings of sharing function evaluations.

\subsubsection{General linear constraints}

For the randomly weighted linear constraint problems noted in Table \ref{tab:combined_weights}, we see similar results due to similar behind the scenes mechanics of the algorithm even with the shared weights. Notably though, we see that the multitasking approach is always more effective for the middle bound than in the unit weight case rather than just early on in the optimization process. This indicates that the penalty of the larger search space isn't as important once we have picked high-coverage items.

The results in Table \ref{tab:combined_weights} for the degree weighted linear constraint problems however, show that it was rare for the multitasking approach to be worse, instead there was no significant difference between the two approaches for the lower bounds. Notably, the coverage for both algorithms were much lower than the results for the other two constraint regimes. Here the benefit of sharing the population balances the drawback of the reduced search space for the larger problems. This is likely due to the fact that these are more difficult problems and GSEMO must be more careful picking optimal and feasible items while multitasking GSEMO is able to exploit its ability to have a higher effective number of generations to make these choices.

These results are related to the justification for the efficacy of sliding window approaches, which reduces this search space allowing the algorithm to focus on optimizing certain values more effectively. This indicates that the combined sliding window approach and multitasking approach will potentially be able to exploit the advantages of each other highly effectively.

\begin{table}[htbp]
\centering
\caption{Results for the three sets of experiments. +* indicates Multitasking was significantly better, -* indicates Classical was better, = indicates no difference.}
\label{tab:combined_weights}
\tiny
\begin{adjustbox}{max totalheight=\textheight, keepaspectratio, max width=\textwidth}
\begin{tabular}{|l|r|r|ccccc|ccccc|ccccc|}
\toprule
Graph & Bound & Gen & \multicolumn{5}{c|}{Unit} & \multicolumn{5}{c|}{Random Linear} & \multicolumn{5}{c|}{Degree Linear} \\
 & & & $\text{Average}_{\text{c}}$ & $\text{Std}_{\text{c}}$ & $\text{Average}_{\text{m}}$ & $\text{Std}_{\text{m}}$ & Stat & $\text{Average}_{\text{c}}$ & $\text{Std}_{\text{c}}$ & $\text{Average}_{\text{m}}$ & $\text{Std}_{\text{m}}$ & Stat & $\text{Average}_{\text{c}}$ & $\text{Std}_{\text{c}}$ & $\text{Average}_{\text{m}}$ & $\text{Std}_{\text{m}}$ & Stat \\
\midrule
\multirow{20}{*}{\rotatebox{90}{Erdos992}}
 & 1 & 100000 & \textbf{62.0} & \textbf{0.0} & 55.7 & 5.3 & -* & \textbf{61.6} & \textbf{1.3} & 52.9 & 7.4 & -* & 2.0 & 0.0 & 2.0 & 0.0 & = \\
 & 1 & 200000 & \textbf{62.0} & \textbf{0.0} & 59.4 & 4.0 & -* & \textbf{62.0} & \textbf{0.0} & 55.8 & 6.5 & -* & 2.0 & 0.0 & 2.0 & 0.0 & = \\
 & 1 & 500000 & \textbf{62.0} & \textbf{0.0} & 61.3 & 1.3 & -* & \textbf{62.0} & \textbf{0.0} & 59.0 & 3.9 & -* & 2.0 & 0.0 & 2.0 & 0.0 & = \\
 & 1 & 1000000 & \textbf{62.0} & \textbf{0.0} & 61.5 & 1.0 & -* & \textbf{62.0} & \textbf{0.0} & 60.9 & 2.4 & -* & 2.0 & 0.0 & 2.0 & 0.0 & = \\
 & 12 & 100000 & \textbf{589.2} & \textbf{5.2} & 527.4 & 15.7 & -* & \textbf{676.5} & \textbf{21.7} & 619.4 & 22.2 & -* & 24.0 & 0.0 & 24.0 & 0.0 & = \\
 & 12 & 200000 & \textbf{599.2} & \textbf{2.6} & 554.4 & 16.7 & -* & \textbf{740.7} & \textbf{14.3} & 674.7 & 19.4 & -* & 24.0 & 0.0 & 24.0 & 0.0 & = \\
 & 12 & 500000 & \textbf{603.3} & \textbf{1.2} & 589.2 & 5.2 & -* & \textbf{788.7} & \textbf{7.5} & 750.0 & 15.3 & -* & 24.0 & 0.0 & 24.0 & 0.0 & = \\
 & 12 & 1000000 & \textbf{603.9} & \textbf{0.3} & 599.0 & 3.4 & -* & \textbf{806.1} & \textbf{3.9} & 784.7 & 10.5 & -* & 24.0 & 0.0 & 24.0 & 0.0 & = \\
 & 78 & 100000 & 1896.0 & 41.0 & \textbf{1985.5} & \textbf{36.0} & +* & 1754.5 & 38.3 & \textbf{2027.3} & \textbf{33.0} & +* & 156.0 & 0.0 & 156.0 & 0.0 & = \\
 & 78 & 200000 & 2141.9 & 28.7 & \textbf{2169.7} & \textbf{26.2} & +* & 1990.9 & 36.7 & \textbf{2218.5} & \textbf{26.0} & +* & 156.0 & 0.0 & 156.0 & 0.0 & = \\
 & 78 & 500000 & 2378.1 & 15.4 & 2369.2 & 17.7 & = & 2283.7 & 23.1 & \textbf{2455.8} & \textbf{19.7} & +* & 156.0 & 0.0 & 156.0 & 0.0 & = \\
 & 78 & 1000000 & 2453.1 & 4.8 & 2450.4 & 5.1 & = & 2478.4 & 16.3 & \textbf{2602.0} & \textbf{12.7} & +* & 156.0 & 0.0 & 156.0 & 0.0 & = \\
 & 305 & 100000 & 2984.7 & 55.4 & \textbf{3780.8} & \textbf{33.5} & +* & 2662.0 & 55.7 & \textbf{3605.6} & \textbf{32.6} & +* & 572.1 & 2.8 & \textbf{606.4} & \textbf{1.2} & +* \\
 & 305 & 200000 & 3391.6 & 38.4 & \textbf{4126.8} & \textbf{24.7} & +* & 3129.6 & 40.3 & \textbf{3950.7} & \textbf{21.7} & +* & 599.1 & 1.9 & \textbf{610.0} & \textbf{0.0} & +* \\
 & 305 & 500000 & 3927.1 & 28.5 & \textbf{4533.3} & \textbf{10.5} & +* & 3663.8 & 31.7 & \textbf{4350.4} & \textbf{16.4} & +* & 610.0 & 0.0 & 610.0 & 0.0 & = \\
 & 305 & 1000000 & 4306.0 & 17.5 & \textbf{4727.5} & \textbf{7.1} & +* & 4021.0 & 25.2 & \textbf{4589.8} & \textbf{10.2} & +* & 610.0 & 0.0 & 610.0 & 0.0 & = \\
 & 610 & 100000 & 3139.3 & 59.0 & \textbf{4452.8} & \textbf{30.5} & +* & 2646.6 & 67.5 & \textbf{4026.7} & \textbf{36.8} & +* & 871.1 & 5.6 & \textbf{1008.5} & \textbf{3.4} & +* \\
 & 610 & 200000 & 3759.9 & 55.9 & \textbf{4811.4} & \textbf{16.4} & +* & 3269.8 & 50.8 & \textbf{4516.3} & \textbf{39.3} & +* & 937.8 & 4.9 & \textbf{1039.8} & \textbf{2.4} & +* \\
 & 610 & 500000 & 4465.7 & 27.0 & \textbf{5053.2} & \textbf{7.1} & +* & 4027.1 & 39.6 & \textbf{4904.9} & \textbf{13.3} & +* & 1008.8 & 3.8 & \textbf{1049.7} & \textbf{0.6} & +* \\
 & 610 & 1000000 & 4807.8 & 16.0 & \textbf{5092.6} & \textbf{1.2} & +* & 4521.2 & 26.3 & \textbf{5043.4} & \textbf{5.5} & +* & 1039.3 & 2.0 & \textbf{1050.0} & \textbf{0.0} & +* \\
\midrule\midrule
\multirow{20}{*}{\rotatebox{90}{ca-AstroPh}}
 & 1 & 100000 & \textbf{497.3} & \textbf{23.5} & 290.4 & 76.9 & -* & \textbf{401.1} & \textbf{33.1} & 226.4 & 92.4 & -* & 2.0 & 0.0 & 2.0 & 0.0 & = \\
 & 1 & 200000 & \textbf{502.4} & \textbf{14.1} & 305.0 & 75.6 & -* & \textbf{419.2} & \textbf{16.5} & 250.6 & 94.8 & -* & 2.0 & 0.0 & 2.0 & 0.0 & = \\
 & 1 & 500000 & \textbf{505.0} & \textbf{0.0} & 353.3 & 66.8 & -* & \textbf{425.8} & \textbf{7.6} & 281.2 & 83.6 & -* & 2.0 & 0.0 & 2.0 & 0.0 & = \\
 & 1 & 1000000 & \textbf{505.0} & \textbf{0.0} & 374.8 & 57.7 & -* & \textbf{427.7} & \textbf{1.6} & 302.0 & 79.6 & -* & 2.0 & 0.0 & 2.0 & 0.0 & = \\
 & 14 & 100000 & \textbf{2611.8} & \textbf{83.9} & 2091.5 & 92.8 & -* & \textbf{2578.5} & \textbf{84.7} & 2297.9 & 90.8 & -* & \textbf{28.0} & \textbf{0.0} & 27.8 & 0.4 & -* \\
 & 14 & 200000 & \textbf{2769.3} & \textbf{64.8} & 2199.7 & 73.3 & -* & \textbf{2767.3} & \textbf{61.8} & 2460.9 & 92.2 & -* & 28.0 & 0.0 & 28.0 & 0.0 & = \\
 & 14 & 500000 & \textbf{2917.6} & \textbf{34.6} & 2355.9 & 96.3 & -* & \textbf{2993.5} & \textbf{42.1} & 2645.4 & 84.1 & -* & 28.0 & 0.0 & 28.0 & 0.0 & = \\
 & 14 & 1000000 & \textbf{2963.8} & \textbf{11.1} & 2508.3 & 81.1 & -* & \textbf{3142.4} & \textbf{34.7} & 2779.3 & 67.4 & -* & 28.0 & 0.0 & 28.0 & 0.0 & = \\
 & 133 & 100000 & 6470.5 & 73.6 & \textbf{6649.0} & \textbf{65.7} & +* & 6216.0 & 69.7 & \textbf{6854.5} & \textbf{76.4} & +* & \textbf{230.6} & \textbf{1.4} & 218.2 & 1.6 & -* \\
 & 133 & 200000 & 6943.1 & 69.3 & 6954.8 & 52.0 & = & 6719.1 & 58.8 & \textbf{7200.2} & \textbf{70.6} & +* & \textbf{246.8} & \textbf{1.4} & 232.0 & 1.5 & -* \\
 & 133 & 500000 & \textbf{7529.2} & \textbf{49.1} & 7341.3 & 47.4 & -* & 7336.5 & 51.9 & \textbf{7651.5} & \textbf{47.0} & +* & \textbf{264.5} & \textbf{0.9} & 251.5 & 1.3 & -* \\
 & 133 & 1000000 & \textbf{7938.7} & \textbf{46.9} & 7648.2 & 38.3 & -* & 7824.9 & 51.5 & \textbf{7985.1} & \textbf{52.1} & +* & \textbf{266.0} & \textbf{0.0} & 263.8 & 1.0 & -* \\
 & 895 & 100000 & 9510.3 & 94.6 & \textbf{12326.4} & \textbf{55.2} & +* & 8179.4 & 133.8 & \textbf{11439.6} & \textbf{102.1} & +* & 1098.8 & 4.0 & \textbf{1190.5} & \textbf{4.6} & +* \\
 & 895 & 200000 & 10911.2 & 92.8 & \textbf{12907.8} & \textbf{39.9} & +* & 9544.1 & 129.5 & \textbf{12499.1} & \textbf{44.6} & +* & 1156.4 & 2.7 & \textbf{1260.0} & \textbf{3.4} & +* \\
 & 895 & 500000 & 12340.1 & 50.3 & \textbf{13553.5} & \textbf{32.8} & +* & 11398.5 & 86.6 & \textbf{13353.4} & \textbf{39.0} & +* & 1245.2 & 3.1 & \textbf{1362.1} & \textbf{3.4} & +* \\
 & 895 & 1000000 & 13001.5 & 36.0 & \textbf{14036.7} & \textbf{28.4} & +* & 12525.7 & 46.4 & \textbf{13916.8} & \textbf{34.6} & +* & 1321.3 & 2.5 & \textbf{1447.1} & \textbf{3.8} & +* \\
 & 1790 & 100000 & 9466.3 & 104.3 & \textbf{12774.6} & \textbf{84.1} & +* & 8236.7 & 137.9 & \textbf{11439.6} & \textbf{102.1} & +* & 1986.9 & 6.1 & \textbf{2195.2} & \textbf{5.4} & +* \\
 & 1790 & 200000 & 10887.1 & 91.0 & \textbf{14112.3} & \textbf{54.4} & +* & 9582.6 & 112.7 & \textbf{12802.2} & \textbf{80.0} & +* & 2066.0 & 4.4 & \textbf{2313.7} & \textbf{4.6} & +* \\
 & 1790 & 500000 & 12737.1 & 79.3 & \textbf{15489.5} & \textbf{35.5} & +* & 11408.2 & 88.5 & \textbf{14508.3} & \textbf{69.0} & +* & 2195.9 & 4.8 & \textbf{2486.0} & \textbf{4.4} & +* \\
 & 1790 & 1000000 & 14092.2 & 63.7 & \textbf{16078.6} & \textbf{22.4} & +* & 12771.3 & 94.7 & \textbf{15574.7} & \textbf{36.1} & +* & 2312.5 & 4.4 & \textbf{2629.5} & \textbf{4.6} & +* \\
\midrule\midrule
\multirow{20}{*}{\rotatebox{90}{ca-CondMat}}
 & 1 & 100000 & \textbf{273.8} & \textbf{16.4} & 126.3 & 49.6 & -* & \textbf{204.2} & \textbf{47.0} & 83.5 & 29.6 & -* & 2.0 & 0.0 & 2.0 & 0.0 & = \\
 & 1 & 200000 & \textbf{280.0} & \textbf{0.0} & 138.8 & 54.4 & -* & \textbf{232.6} & \textbf{35.4} & 99.4 & 37.9 & -* & 2.0 & 0.0 & 2.0 & 0.0 & = \\
 & 1 & 500000 & \textbf{280.0} & \textbf{0.0} & 155.9 & 55.0 & -* & \textbf{249.6} & \textbf{12.9} & 109.3 & 33.5 & -* & 2.0 & 0.0 & 2.0 & 0.0 & = \\
 & 1 & 1000000 & \textbf{280.0} & \textbf{0.0} & 185.5 & 56.1 & -* & \textbf{253.0} & \textbf{0.0} & 119.1 & 30.6 & -* & 2.0 & 0.0 & 2.0 & 0.0 & = \\
 & 14 & 100000 & \textbf{1513.8} & \textbf{51.4} & 1046.2 & 63.4 & -* & \textbf{1454.7} & \textbf{73.7} & 1263.1 & 99.7 & -* & 28.0 & 0.0 & 27.9 & 0.3 & = \\
 & 14 & 200000 & \textbf{1656.5} & \textbf{47.7} & 1156.4 & 71.8 & -* & \textbf{1644.9} & \textbf{57.0} & 1366.8 & 83.0 & -* & 28.0 & 0.0 & 28.0 & 0.0 & = \\
 & 14 & 500000 & \textbf{1800.5} & \textbf{20.1} & 1300.4 & 69.3 & -* & \textbf{1839.3} & \textbf{47.8} & 1490.6 & 66.3 & -* & 28.0 & 0.0 & 28.0 & 0.0 & = \\
 & 14 & 1000000 & \textbf{1844.5} & \textbf{8.2} & 1424.4 & 68.2 & -* & \textbf{1980.1} & \textbf{28.9} & 1616.6 & 67.7 & -* & 28.0 & 0.0 & 28.0 & 0.0 & = \\
 & 146 & 100000 & 4359.6 & 73.6 & \textbf{4620.1} & \textbf{65.3} & +* & 4087.7 & 84.0 & \textbf{4778.7} & \textbf{69.6} & +* & \textbf{261.2} & \textbf{1.2} & 246.8 & 1.8 & -* \\
 & 146 & 200000 & 4899.8 & 70.5 & 4916.1 & 69.8 & = & 4639.5 & 70.9 & \textbf{5141.5} & \textbf{53.8} & +* & \textbf{277.9} & \textbf{1.4} & 260.5 & 1.7 & -* \\
 & 146 & 500000 & \textbf{5578.7} & \textbf{38.5} & 5316.3 & 55.9 & -* & 5329.2 & 54.9 & \textbf{5637.1} & \textbf{46.5} & +* & \textbf{292.0} & \textbf{0.2} & 281.1 & 1.5 & -* \\
 & 146 & 1000000 & \textbf{6108.6} & \textbf{42.2} & 5653.1 & 41.9 & -* & 5838.1 & 41.7 & \textbf{5988.4} & \textbf{48.7} & +* & \textbf{292.0} & \textbf{0.0} & 291.4 & 0.7 & -* \\
 & 1068 & 100000 & 7190.8 & 161.0 & \textbf{11307.9} & \textbf{98.4} & +* & 5775.2 & 162.1 & \textbf{9683.0} & \textbf{112.5} & +* & 1353.5 & 3.9 & \textbf{1467.2} & \textbf{2.8} & +* \\
 & 1068 & 200000 & 8971.2 & 159.4 & \textbf{12231.5} & \textbf{77.7} & +* & 7319.3 & 187.8 & \textbf{11568.4} & \textbf{72.1} & +* & 1422.2 & 4.0 & \textbf{1545.4} & \textbf{3.6} & +* \\
 & 1068 & 500000 & 11349.4 & 74.2 & \textbf{13194.6} & \textbf{51.4} & +* & 9679.1 & 160.6 & \textbf{12941.2} & \textbf{53.1} & +* & 1525.0 & 2.4 & \textbf{1662.3} & \textbf{3.9} & +* \\
 & 1068 & 1000000 & 12362.2 & 61.3 & \textbf{13936.6} & \textbf{35.8} & +* & 11578.5 & 84.1 & \textbf{13761.1} & \textbf{37.0} & +* & 1616.1 & 3.8 & \textbf{1761.4} & \textbf{3.8} & +* \\
 & 2136 & 100000 & 7212.1 & 146.9 & \textbf{11584.5} & \textbf{115.7} & +* & 5792.1 & 113.6 & \textbf{9683.0} & \textbf{112.5} & +* & 2422.5 & 8.3 & \textbf{2708.7} & \textbf{5.1} & +* \\
 & 2136 & 200000 & 8969.4 & 143.5 & \textbf{13673.2} & \textbf{99.8} & +* & 7320.1 & 142.5 & \textbf{11681.5} & \textbf{126.0} & +* & 2541.6 & 6.6 & \textbf{2848.3} & \textbf{5.4} & +* \\
 & 2136 & 500000 & 11565.6 & 109.6 & \textbf{16182.8} & \textbf{56.3} & +* & 9679.1 & 152.2 & \textbf{14395.3} & \textbf{105.8} & +* & 2709.6 & 5.6 & \textbf{3049.6} & \textbf{3.8} & +* \\
 & 2136 & 1000000 & 13669.4 & 82.1 & \textbf{17214.7} & \textbf{37.0} & +* & 11672.2 & 125.3 & \textbf{16342.7} & \textbf{65.2} & +* & 2847.0 & 5.9 & \textbf{3219.9} & \textbf{4.8} & +* \\
\midrule\midrule
\multirow{20}{*}{\rotatebox{90}{ca-GrQc}}
 & 1 & 100000 & \textbf{82.0} & \textbf{0.0} & 75.7 & 7.6 & -* & \textbf{81.9} & \textbf{0.4} & 66.8 & 13.1 & -* & 2.0 & 0.0 & 2.0 & 0.0 & = \\
 & 1 & 200000 & \textbf{82.0} & \textbf{0.0} & 79.4 & 3.9 & -* & \textbf{82.0} & \textbf{0.0} & 70.2 & 10.4 & -* & 2.0 & 0.0 & 2.0 & 0.0 & = \\
 & 1 & 500000 & \textbf{82.0} & \textbf{0.0} & 80.9 & 1.5 & -* & \textbf{82.0} & \textbf{0.0} & 75.1 & 8.0 & -* & 2.0 & 0.0 & 2.0 & 0.0 & = \\
 & 1 & 1000000 & \textbf{82.0} & \textbf{0.0} & 81.7 & 0.7 & -* & \textbf{82.0} & \textbf{0.0} & 77.3 & 6.6 & -* & 2.0 & 0.0 & 2.0 & 0.0 & = \\
 & 12 & 100000 & \textbf{489.4} & \textbf{8.6} & 440.2 & 16.3 & -* & \textbf{537.5} & \textbf{10.3} & 494.9 & 14.8 & -* & 24.0 & 0.0 & 24.0 & 0.0 & = \\
 & 12 & 200000 & \textbf{502.7} & \textbf{5.8} & 462.0 & 10.2 & -* & \textbf{565.9} & \textbf{9.5} & 526.9 & 15.0 & -* & 24.0 & 0.0 & 24.0 & 0.0 & = \\
 & 12 & 500000 & \textbf{509.8} & \textbf{0.6} & 487.7 & 8.0 & -* & \textbf{590.2} & \textbf{5.1} & 563.9 & 9.7 & -* & 24.0 & 0.0 & 24.0 & 0.0 & = \\
 & 12 & 1000000 & \textbf{510.0} & \textbf{0.0} & 500.5 & 6.0 & -* & \textbf{598.8} & \textbf{4.0} & 586.6 & 5.5 & -* & 24.0 & 0.0 & 24.0 & 0.0 & = \\
 & 64 & 100000 & 1319.5 & 16.4 & 1315.2 & 18.1 & = & 1279.8 & 15.9 & \textbf{1365.2} & \textbf{15.1} & +* & 128.0 & 0.0 & 128.0 & 0.0 & = \\
 & 64 & 200000 & \textbf{1407.2} & \textbf{13.2} & 1398.7 & 16.4 & -* & 1387.2 & 17.3 & \textbf{1457.9} & \textbf{16.2} & +* & 128.0 & 0.0 & 128.0 & 0.0 & = \\
 & 64 & 500000 & \textbf{1488.2} & \textbf{8.3} & 1480.6 & 8.2 & -* & 1512.6 & 12.3 & \textbf{1565.1} & \textbf{8.7} & +* & 128.0 & 0.0 & 128.0 & 0.0 & = \\
 & 64 & 1000000 & \textbf{1513.5} & \textbf{6.6} & 1510.0 & 5.6 & -* & 1595.5 & 10.3 & \textbf{1628.4} & \textbf{7.3} & +* & 128.0 & 0.0 & 128.0 & 0.0 & = \\
 & 207 & 100000 & 2149.1 & 21.4 & \textbf{2406.4} & \textbf{14.3} & +* & 2055.0 & 18.9 & \textbf{2381.8} & \textbf{18.8} & +* & 371.8 & 1.9 & \textbf{402.4} & \textbf{1.2} & +* \\
 & 207 & 200000 & 2322.8 & 22.1 & \textbf{2546.5} & \textbf{12.2} & +* & 2230.8 & 13.6 & \textbf{2529.6} & \textbf{14.4} & +* & 394.3 & 1.9 & \textbf{414.0} & \textbf{0.0} & +* \\
 & 207 & 500000 & 2529.5 & 15.5 & \textbf{2691.5} & \textbf{7.7} & +* & 2448.3 & 13.2 & \textbf{2704.3} & \textbf{10.4} & +* & 413.8 & 0.4 & \textbf{414.0} & \textbf{0.0} & +* \\
 & 207 & 1000000 & 2657.6 & 11.0 & \textbf{2749.1} & \textbf{7.8} & +* & 2601.8 & 8.7 & \textbf{2805.3} & \textbf{8.7} & +* & 414.0 & 0.0 & 414.0 & 0.0 & = \\
 & 415 & 100000 & 2707.2 & 23.0 & \textbf{3170.3} & \textbf{12.4} & +* & 2410.7 & 36.3 & \textbf{3049.1} & \textbf{15.7} & +* & 644.0 & 3.1 & \textbf{741.5} & \textbf{2.5} & +* \\
 & 415 & 200000 & 2918.7 & 16.5 & \textbf{3340.8} & \textbf{12.2} & +* & 2767.1 & 24.8 & \textbf{3241.9} & \textbf{12.0} & +* & 685.8 & 2.8 & \textbf{783.5} & \textbf{1.3} & +* \\
 & 415 & 500000 & 3174.5 & 13.5 & \textbf{3506.0} & \textbf{7.3} & +* & 3051.7 & 10.9 & \textbf{3451.1} & \textbf{9.0} & +* & 742.2 & 2.4 & \textbf{824.9} & \textbf{1.2} & +* \\
 & 415 & 1000000 & 3342.0 & 12.8 & \textbf{3575.1} & \textbf{4.8} & +* & 3241.4 & 12.0 & \textbf{3565.9} & \textbf{7.3} & +* & 783.0 & 1.9 & \textbf{830.0} & \textbf{0.0} & +* \\
\midrule\midrule
\multirow{20}{*}{\rotatebox{90}{ca-HepPh}}
 & 1 & 100000 & \textbf{492.0} & \textbf{0.0} & 406.0 & 55.2 & -* & \textbf{467.3} & \textbf{25.1} & 355.8 & 64.1 & -* & 2.0 & 0.0 & 2.0 & 0.0 & = \\
 & 1 & 200000 & \textbf{492.0} & \textbf{0.0} & 429.5 & 40.4 & -* & \textbf{477.4} & \textbf{14.5} & 374.9 & 48.8 & -* & 2.0 & 0.0 & 2.0 & 0.0 & = \\
 & 1 & 500000 & \textbf{492.0} & \textbf{0.0} & 450.4 & 34.4 & -* & \textbf{483.0} & \textbf{0.0} & 394.4 & 49.5 & -* & 2.0 & 0.0 & 2.0 & 0.0 & = \\
 & 1 & 1000000 & \textbf{492.0} & \textbf{0.0} & 470.1 & 27.2 & -* & \textbf{483.0} & \textbf{0.0} & 411.8 & 38.6 & -* & 2.0 & 0.0 & 2.0 & 0.0 & = \\
 & 13 & 100000 & \textbf{1696.2} & \textbf{28.6} & 1448.0 & 49.5 & -* & \textbf{1720.7} & \textbf{38.5} & 1589.8 & 43.5 & -* & 26.0 & 0.0 & 26.0 & 0.0 & = \\
 & 13 & 200000 & \textbf{1776.7} & \textbf{26.7} & 1515.0 & 65.5 & -* & \textbf{1826.8} & \textbf{30.6} & 1657.7 & 35.3 & -* & 26.0 & 0.0 & 26.0 & 0.0 & = \\
 & 13 & 500000 & \textbf{1828.2} & \textbf{11.9} & 1600.4 & 47.2 & -* & \textbf{1958.5} & \textbf{22.7} & 1753.8 & 32.8 & -* & 26.0 & 0.0 & 26.0 & 0.0 & = \\
 & 13 & 1000000 & \textbf{1848.6} & \textbf{13.6} & 1668.0 & 42.9 & -* & \textbf{2040.5} & \textbf{19.4} & 1840.1 & 39.4 & -* & 26.0 & 0.0 & 26.0 & 0.0 & = \\
 & 105 & 100000 & 3744.1 & 44.4 & \textbf{3767.4} & \textbf{40.9} & +* & 3611.6 & 48.7 & \textbf{3891.8} & \textbf{38.0} & +* & \textbf{203.0} & \textbf{1.2} & 196.1 & 1.5 & -* \\
 & 105 & 200000 & \textbf{3992.7} & \textbf{41.1} & 3925.1 & 35.6 & -* & 3871.2 & 38.9 & \textbf{4075.9} & \textbf{30.0} & +* & \textbf{209.9} & \textbf{0.3} & 206.0 & 1.0 & -* \\
 & 105 & 500000 & \textbf{4311.7} & \textbf{24.7} & 4167.2 & 36.4 & -* & 4228.1 & 34.3 & \textbf{4340.7} & \textbf{19.7} & +* & 210.0 & 0.0 & 210.0 & 0.0 & = \\
 & 105 & 1000000 & \textbf{4513.9} & \textbf{23.4} & 4367.9 & 23.5 & -* & 4469.6 & 28.2 & \textbf{4553.5} & \textbf{27.2} & +* & 210.0 & 0.0 & 210.0 & 0.0 & = \\
 & 560 & 100000 & 5902.4 & 62.5 & \textbf{6981.1} & \textbf{39.6} & +* & 5101.8 & 87.8 & \textbf{6774.5} & \textbf{37.5} & +* & 788.2 & 3.3 & \textbf{870.0} & \textbf{4.2} & +* \\
 & 560 & 200000 & 6524.3 & 37.1 & \textbf{7313.5} & \textbf{33.1} & +* & 5962.7 & 69.2 & \textbf{7180.5} & \textbf{34.6} & +* & 841.3 & 3.6 & \textbf{924.2} & \textbf{2.9} & +* \\
 & 560 & 500000 & 7096.3 & 30.4 & \textbf{7769.3} & \textbf{19.5} & +* & 6803.6 & 34.9 & \textbf{7680.2} & \textbf{34.4} & +* & 914.0 & 2.7 & \textbf{999.0} & \textbf{3.4} & +* \\
 & 560 & 1000000 & 7519.9 & 23.4 & \textbf{8106.7} & \textbf{20.8} & +* & 7256.8 & 24.2 & \textbf{8033.7} & \textbf{21.3} & +* & 973.2 & 2.6 & \textbf{1056.0} & \textbf{2.7} & +* \\
 & 1120 & 100000 & 5920.1 & 79.8 & \textbf{8150.2} & \textbf{39.3} & +* & 5118.1 & 75.2 & \textbf{7215.6} & \textbf{66.4} & +* & 1371.4 & 4.3 & \textbf{1578.5} & \textbf{3.4} & +* \\
 & 1120 & 200000 & 6868.1 & 70.5 & \textbf{8788.3} & \textbf{29.7} & +* & 5969.4 & 83.0 & \textbf{8180.5} & \textbf{58.8} & +* & 1454.4 & 4.6 & \textbf{1679.8} & \textbf{2.8} & +* \\
 & 1120 & 500000 & 8129.7 & 49.3 & \textbf{9374.2} & \textbf{20.1} & +* & 7222.4 & 72.4 & \textbf{9048.8} & \textbf{21.2} & +* & 1579.2 & 4.1 & \textbf{1819.3} & \textbf{3.5} & +* \\
 & 1120 & 1000000 & 8796.6 & 31.4 & \textbf{9763.7} & \textbf{15.7} & +* & 8168.0 & 65.6 & \textbf{9491.8} & \textbf{19.2} & +* & 1680.5 & 3.4 & \textbf{1921.7} & \textbf{3.0} & +* \\
\midrule
\bottomrule
\end{tabular}
\end{adjustbox}
\end{table}
\section{Conclusions}
We have shown that our multitasking approach results in small Pareto fronts that allow the solutions to be shared between similar problems. We analyzed the expected time until the introduced multitasking approaches obtain a $(1-1/e)$-approximation for each of the given problems and showed that it's an effective method of solving these problems. Our experimental investigation using the maximum coverage problem demonstrated that the multitasking approaches outperform the classical approaches when given a fixed compute budget for in some scenarios, but that it also was worse in others. This shows that while theoretically the multitasking approach should be more effective than the classical approach for solving these constrained submodular monotone functions, in practice this is highly problem dependent. Notably, even when considering constraints without linear weights we still see a similar pattern between the efficacy of GSEMO vs multitasking GSEMO. Both through theory and experiments, we have shown that the primary strengths of multitasking GSEMO come from problems that are similar enough that no one problem dominates the other. 

There are still many open avenues of investigation. Currently we have focused on submodular monotone problems with uniform constraints. Dynamic constraint problems would be a natural extension, where the constraint can change as the algorithm processes. This can almost be seen as a new static problem were we reuse the prior population as a starting point. However, it would be interesting to be less strict in the requirements and allow approximately submodular functions, non-monotone functions or different classes of constraints such as matroid constraints. Sliding window techniques could also be used to reduce the size of the Pareto front further, which would combine effectively with the improvement the multitasking approach has on the size of the front. This could potentially improve the tradeoff the multitasking approach has to make between the large search space penalizing the smaller problems while giving effectively more generations to the larger problems.

Experimentally, looking at longer generations and more varied constraints would also increase the understanding of the dynamics underlying the multitasking approach. Even a deeper look at varying $k$ or picking different sets of bounds would also give a deeper insight into the problem similarity that drives the effectiveness (or not) of the approach.

\bibliographystyle{unsrt}
\bibliography{bib}

\end{document}